\definecolor{tappanred}{HTML}{9A3324}
\definecolor{rossorange}{HTML}{D86018}
\definecolor{teal}{HTML}{008080}
\definecolor{crimson}{HTML}{DC143C}
\definecolor{darkorange}{HTML}{FF8C00}
\definecolor{green}{HTML}{A5A508}
\newtheorem{example}{Example}
\newtheorem{theorem}{Theorem}
\newtheorem{problem}{Problem}
\newcommand{\set}[1]{\mathcal{#1}}
\newcommand{\val}[1]{\boldsymbol{#1}}
\newcommand{\sign}{\mathrm{sign}}
\newcommand{\signal}{\sigma}
\newcommand{\reals}{\mathbb{R}}
\newcommand{\integers}{\mathbb{Z}}
\newcommand{\sdomain}{\mathcal{S}}
\newcommand{\timedomain}{\mathbb{T}}
\newcommand{\mat}[1]{\begin{bmatrix} #1 \end{bmatrix}}
\newcommand{\suchthat}{\text{ s.t. }}
\newcommand{\dataset}{\set{D}}
\newcommand{\preferencedata}{\set{PR}}
\newcommand{\robustnesstree}{Robustness Computation Tree\xspace}
\newcommand{\RT}{RCT\xspace}
\newcommand{\wstlrob}{weighted robustness\xspace}
\newcommand{\G}{\square}
\newcommand{\F}{\Diamond}
\newcommand{\U}{\text{\bf{U}}}
\newcommand{\predicate}{\mu}
\newcommand{\lb}{a}
\newcommand{\ub}{b} 
\newcommand{\rob}{\rho}
\newcommand{\weight}{w}
\newcommand{\weightset}{\set{W}}
\newcommand{\wval}{\val{w}}
\newcommand{\wrob}{r}
\title{ \LARGE Safe and Optimal Learning from Preferences via Weighted Temporal Logic with Applications in Robotics and Formula 1}
\author{Ruya Karagulle, Cristian-Ioan Vasile, Necmiye Ozay
\thanks{R. Karagulle and N. Ozay are with the University of Michigan. C.I. Vasile is with Lehigh University. Corresponding author email: ruyakrgl@umich.edu}\thanks{This work is supported in part by NSF Grants TI-2303564 and IIS-2442644.
The authors thank G. A. Cardona for technical discussions. R. Karagulle thanks D. Aksoy, A. Kayar, and E. Bugdayci for F1 discussions.}}
\begin{document}
\maketitle

\begin{abstract} 
Autonomous systems increasingly rely on human feedback to align their behavior, expressed as pairwise comparisons, rankings, or demonstrations. While existing methods can adapt behaviors, they often fail to guarantee safety in safety-critical domains. We propose a safety-guaranteed, optimal, and efficient approach for solving the learning problem from preferences, rankings, or demonstrations using Weighted Signal Temporal Logic (WSTL). WSTL learning problems, when implemented naively, lead to multi-linear constraints in the weights to be learned. By introducing structural pruning and log-transform procedures, we reduce the problem size and recast it as a Mixed-Integer Linear Program while preserving safety guarantees. Experiments on robotic navigation and real-world Formula 1 data demonstrate that the method captures nuanced preferences and models complex task objectives.
\end{abstract}

\section{Introduction}
Autonomous systems are increasingly part of our daily lives, from driverless cars in urban navigation to household robots performing domestic chores. Since these systems operate closely alongside humans, learning from human feedback is a natural way to ensure the system's behaviors align with human desires. Humans express their preferences in many forms: pairwise comparisons (“I prefer A over B”) \cite{shin2023plbenchmarks}, rankings (“I prefer A over B, and B over C”) \cite{Myers2022rank, Brown2022ranked}, or demonstrations (examples of trajectories or task executions) \cite{Schrum2024maveric, deHeuvel2022vrpersonalized}. Preference learning and learning-to-rank methods leverage comparative judgments to form the queries and collect preference data about behaviors. These preferences are then used in frameworks such as preference-based learning \cite{Biyik2022thesis}, reinforcement learning from human feedback \cite{Christiano2017rlfrompref, Hejna2023fewshot}, and direct optimization \cite{An2023dppo}, to tune the system's behavior. In learning from demonstrations, users directly demonstrate the task or guide the system, by tools like teleoperation. After data collection, these demonstrations are fed into frameworks such as behavioral cloning or inverse reinforcement learning. 

While these approaches have shown promise in adapting the system behavior based on preferences, they fall short in providing rigorous safety guarantees when the systems are deployed in safety-critical domains, such as autonomous vehicles and industrial automation. These methods generally assume that users demonstrate and prefer safe options. However, in cases where the user cannot reliably judge safety, this assumption can be dangerous or even fatal for all agents in the environment. The challenge is to enable learning within the space of safe behaviors, even in cases where user preferences conflict with safety requirements.

Karagulle et al. offer a solution for the safe preference learning challenge \cite{Karagulle2024spl, Karagulle2024sapl}. Combining temporal logic formalism with preference learning using pairwise comparisons, their proposed framework learns the importance order of the sub-tasks and time instances using preference data. More specifically, they express system specifications using Weighted Signal Temporal Logic (WSTL) \cite{Mehdipour2021wstl} and learn the weights of the formula. This learned WSTL specification can be used for control synthesis tasks \cite{Karagulle2024cccc}. The framework preserves the qualitative semantics of the task specification, ensuring that the meaning of safety does not depend on learned weights. As WSTL provides a quantitative measure for satisfaction and violation of specifications, the framework guarantees that an unsafe behavior is never favored over a safe one. When the weights of the formula are known, WSTL formulas can be encoded as linear constraints with integers \cite{CaKaVa-NAHS-2025}. However, when weights are to be learned, encoding yields multi-linear constraints, as weights appear multiplicatively. Thus, solving safe preference learning to optimality leads to a Mixed-Integer Problem with multi-linear constraints, a hard class of problems to solve. Instead, the authors use gradient-based learning or random sampling for the learning part. While these methods are effective in finding suitable weights, both methods are fundamentally limited in finding the optimal solution in terms of the number of preferences satisfied. Moreover, gradient-based approaches are susceptible to getting stuck at local minima due to the quantitative semantics' complex nature. 

In this paper, we first extend the safe preference learning problem to richer forms of feedback: pairwise preferences, rankings, and demonstrations. We overcome the computational challenges with two key procedures. We propose the $\log$-transform to linearize constraints and recast the problem as an MILP, proving it preserves the optimizer. Since the $\log$ function requires strictly positive terms, we introduce \textit{structural pruning} to eliminate formula parts that do not affect quantitative semantics. Structural pruning, therefore, provides an efficient encoding of constraints as well. We prove that structural pruning preserves the quantitative semantics of signals over formulas. Together, these techniques allow us to achieve efficient and \emph{optimal} learning, as opposed to heuristic approaches that lack optimality guarantees \cite{Karagulle2024spl}. 

\begin{figure*}[ht]
    \centering
    \begin{subfigure}[t]{0.2\linewidth}
        \centering
 \begin{tikzpicture}[
  level distance=1.5 cm,
  every node/.style={inner sep=0pt, thick, color= darkorange},
  nonleaf/.style={draw, circle, color=teal, minimum size=1cm},
  edge from parent/.style={draw=crimson, thick},
  level 1/.style={sibling distance=40mm},
  level 2/.style={sibling distance=10mm}]

  \node[nonleaf] {$\F_{[3,5]}$}
      child {node[nonleaf] {$\land$}
        child {node at (0,0.2) {$\signal \geq 0$}}
        child {node at (0,-0.2) {$\signal \leq 5$}}
      };
  \end{tikzpicture}
        \caption{AST of $\phi = \F_{[3,5]}(0 \leq \signal \leq 5)$}
        \label{fig:ast}
    \end{subfigure}
\hfill
    \begin{subfigure}[t]{0.35\linewidth}
        \centering
\begin{tikzpicture}[
  level distance=1.5cm,
  sibling distance=20mm,
  every node/.style={inner sep=0pt, thick, color= darkorange},
  nonleaf/.style={draw, circle, color=teal, minimum size=1cm},
  edge from parent/.style={draw=crimson, thick},
  level 2/.style={sibling distance=10mm}]

\node[nonleaf] {$\F_{[3,5]}, 0$}
  child {node[nonleaf] {$\land, 3$}
    child {node at (0,0.2) {$\signal(3) \geq 0$}}
    child {node at (0,-0.2) {$\signal(3) \leq 5$}}
  }
  child {node[nonleaf] {$\land, 4$}
    child {node at (0,0.2) {$\signal(4) \geq 0$}}
    child {node at (0,-0.2) {$\signal(4) \leq 5$}}
  }
  child {node[nonleaf] {$\land, 5$}
    child {node at (0,0.2) {$\signal(5) \geq 0$}}
    child {node at (0,-0.2) {$\signal(5) \leq 5$}}
  };
\end{tikzpicture}
\caption{The \RT at time $t=0$, $\mathcal{T}_{\phi,0}$.}
        \label{fig:rt}
    \end{subfigure}
\hfill
    \begin{subfigure}[t]{0.4\linewidth}
        \centering
\begin{tikzpicture}[level distance=1.5cm,
  sibling distance=20mm,
  every node/.style={inner sep=0pt, thick, color= darkorange},
  nonleaf/.style={draw, circle, color=teal, minimum size=1cm},
  edge from parent/.style={draw=crimson, thick},
  level 2/.style={sibling distance=10mm}
]

\node[nonleaf] {$\F_{[3,5]}, 0$}
  child [edge from parent path={(\tikzparentnode) -- node[midway, left, align=center, font=\scriptsize,  color = crimson] {$\rob_\land^{\signal(3)}$ \\$=-1$} (\tikzchildnode)},
         edge from parent/.style={draw=crimson, thick, opacity=0.3}]
    { node[nonleaf, opacity=0.3] {$\land, 3$}
      child [edge from parent path={(\tikzparentnode) -- node[midway,left, align=center, font=\scriptsize, color = crimson] {$\rob_{\predicate_1}^{\signal(3)}$ \\ $ = -1$} (\tikzchildnode)}]
        { node at (0, 0.2) {$\signal(3) \ge 0$} }
      child [edge from parent path={(\tikzparentnode) -- node[midway,right, align=center, font=\scriptsize, color = crimson] {$\rob_{\predicate_2}^{\signal(3)}$ \\ $ = 6$} (\tikzchildnode)}]
        { node at (0, -0.2) {$\signal(3) \le 5$} }
    }
  child [edge from parent path={(\tikzparentnode) -- node[midway, left, align=center, font=\scriptsize, color = crimson] {$\rob_\land^{\signal(4)}$ \\$=1$} (\tikzchildnode)}]
    { node[nonleaf] {$\land, 4$}
      child [edge from parent path={(\tikzparentnode) -- node[midway, left, align=center, font=\scriptsize, color = crimson] {$\rob_{\predicate_1}^{\signal(4)}$ \\ $ = 4$} (\tikzchildnode)}]
        { node at (0,0.2) {$\signal(4) \ge 0$} }
      child [edge from parent path={(\tikzparentnode) -- node[midway,right, align=center, font=\scriptsize, color = crimson] {$\rob_{\predicate_2}^{\signal(4)}$ \\ $ = 1$} (\tikzchildnode)}]
        { node at (0,-0.2) {$\signal(4) \le 5$} }
    }
  child [edge from parent path={(\tikzparentnode) -- node[midway, right, align=center, font=\scriptsize,  color = crimson] {$\rob_\land^{\signal(5)}$ \\$=2$}  (\tikzchildnode)}]
    { node[nonleaf] {$\land, 5$}
      child [edge from parent path={(\tikzparentnode) -- node[midway, left, align=center, font=\scriptsize, color = crimson] {$\rob_{\predicate_1}^{\signal(5)}$ \\ $ =2$} (\tikzchildnode)}]
        { node at (0,0.2) {$\signal(5) \ge 0$} }
      child [edge from parent path={(\tikzparentnode)-- node[midway,right, align=center, font=\scriptsize, color = crimson] {$\rob_{\predicate_2}^{\signal(5)}$ \\ $ = 3$}  (\tikzchildnode)}]
        { node at (0,-0.2) {$\signal(5) \le 5$} }
    };
\end{tikzpicture}
\caption{Pruned \RT at $t=0$, with the robustness values ($\rob(\signal, \phi_i, t')$ denoted as $\rob_{\phi_i}^{\signal(t')}$) noted at edges. Pruned parts are shown with low opacity.}
        \label{fig:pruned_rt}
    \end{subfigure}
    \caption{Trees associated with $\phi = \F_{[3,5]}(0\leq \signal \leq 5)$; AST, \RT, and pruned \RT for $\signal = [5,6,7,-1,4,2]$, respectively.}
    \label{fig:ast_rt}
    \vspace{-0.5cm}
\end{figure*}

We demonstrate the performance of the proposed method on a safe preference learning problem for a simple robotics navigation task, and a learning-to-rank problem using real-world Formula 1 data. In the robot navigation experiment, we show that the method is responsive to even small changes in preferences and successfully reflects these differences in the synthesized trajectories. In the Formula 1 experiment, we illustrate that the learned weights capture complex structures like race performance, effectively modeling race strategies and providing insights into what factors influence successful outcomes over the course of the race.

\section{Preliminaries}
\subsection{Signal Temporal Logic (STL)}
STL is used to reason about signals $\signal: \timedomain \to \sdomain$, where $\timedomain \subset \integers_{\geq0}$ is the time domain and $\sdomain \subseteq \reals^{m}$ is the $m$-dimensional real-valued signal domain.
A well-formed STL formula $\phi$ follows the grammar $\phi ::= \top \mid \predicate \mid \lnot \phi \mid \phi_1 \land \phi_2 \mid \phi_1 \U_{[\lb,\ub]} \phi_2$. Syntax elements have the following interpretations:  $\top$ is Boolean truth, $\predicate$ is a predicate of the form $\predicate(\signal(t)):= h_\predicate(\signal(t)) \geq 0$ where $h_\predicate: \sdomain \to \reals$ maps the signal value at time instant $t$, to a real value\footnote{Note that it is possible to transform any (in)equality of the form $f(x) \sim c$, where $\sim \in \{\leq, \geq, =\}$, into the form defined in the syntax.}. The operator $\lnot$ is the negation, $\land$ is the conjunction, and $\U_{[\lb,\ub]}$ is the ``Until" operator\footnote{Additional operators; disjunction $\lor$, Always $\G_{[\lb,\ub]}$, and Eventually $\F_{[\lb,\ub]}$ can be derived from the grammar as $\phi_1 \lor \phi_2 = \lnot(\lnot \phi_1 \land \lnot \phi_2)$, $\F_{[\lb,\ub]}\phi = \top \U_{[\lb,\ub]} \phi$, and $\G_{[\lb,\ub]}\phi = \lnot(\F_{[\lb,\ub]}\lnot \phi)$.}. Subscript ${[\lb, \ub]}$ with $\lb, \ub \in \integers_{\geq0}$ defines the time interval that the temporal operator is acting on. When the time interval includes the whole signal length, we omit the time interval subscript. The qualitative semantics of STL are defined in \cite{Donze2010}. If a signal $\signal$ satisfies (violates) a formula $\phi$ at time $t$, it is shown as $(\signal, t)\models \phi$ ($(\signal, t) \not \models \phi$). For qualitative semantics at time $t=0$, we omit $t$ and write $\signal \models \phi$. STL has quantitative semantics to measure how well the signal satisfies the formula or how badly it violates the formula. There are different quantitative semantics in the literature \cite{Donze2010, Varnai2020robustness, MeVaBe-TAC-2025}. We follow the metric defined in \cite{Donze2010}, and call it the \textit{robustness}. The robustness of a signal at time $t$ is shown as $\rob(\signal, \phi, t)$. When $t=0$, we denote it as $\rob(\signal, \phi)$. When we work with finite signals, we assume that the signal length covers the time horizon of the formula, a function of time intervals of temporal operators. Additionally, we use STL over discrete-time (as opposed to continuous-time) finite signals with necessary modifications as needed \cite{DeGiacomo2013ltlf}.
 
Each STL formula can be represented by an Abstract Syntax Tree (AST) \cite{Li2021ast}. In an AST, nodes represent syntactic elements: leaf nodes correspond to predicates or $\top$, non-leaf nodes correspond to logical and temporal operators. Edges encode the relationships between operators and their operands in each subformula. Unary operators have one child node, while binary operators have two. To reason about the robustness more systematically, we represent the robustness computation as a tree. Building on the AST, we define the \robustnesstree (\RT) of $\phi$, at time $t$ as $\mathcal{T}_{\phi,t} = (\mathcal{N}_{\phi,t}, \mathcal{E}_{\phi,t})$, where $\mathcal{N}_{\phi,t}$ is the set of nodes, obtained by extending each AST node with the time index at which the corresponding operator or predicate is evaluated during robustness computation, and $\mathcal{E}_{\phi,t}$ is the set of directed edges encoding parent–child relationships in the time-indexed structure. Placing the signal's predicate robustness values $h_\predicate(\signal)$ at leaf nodes of $\mathcal{T}_{\phi,t}$, and replacing each operator with its function defined in the quantitative semantics, the tree mirrors the robustness computation for $\rob(\signal, \phi, t)$ when evaluating from leaf to root.
For a subformula $\phi_i$ and $t'$ of $\phi$, the subtree $\mathcal{T}_{\phi_i, t'}$ corresponds to $\rho(\signal, \phi_i, t')$.

\begin{example}\label{ex:1}
Let $\phi = \F_{[3,5]}(0 \leq \signal \leq 5)$ be an STL formula. Figure~\ref{fig:ast} and~\ref{fig:rt} show the AST and \RT of $\phi$ at $t=0$, respectively. Tree $\mathcal{T}_{0 \leq \signal \leq 5, t'}$ represents $\rob(\signal, 0 \leq \signal \leq 5, t')$.
\end{example}

\subsection{Weighted Signal Temporal Logic (WSTL)}
WSTL is tailored to represent priorities and preferences in STL formulas \cite{Mehdipour2021wstl}. Its syntax extends STL syntax as 
$\phi ::= \top \mid \predicate \mid \lnot \phi \mid \phi_{1} \land^{\weight} \phi_{2} \mid \phi_{1} \U_{[\lb,\ub]}^{\weight^1,\weight^2} \phi_{2}$,
where the weights are $\weight \in \reals_{>0}^2$ and $\weight^1, \weight^2 : [\lb, \ub]\to \reals_{>0}$. All operators are interpreted as in STL. The quantitative semantics of WSTL is called \textit{\wstlrob}, and defined as \cite{Karagulle2024spl}:
\begin{align}\label{eq:quantsemantics}
\wrob(\signal, \top, t) &= \infty \nonumber\\
\wrob(\signal, \predicate, t) &= \rob(\signal,\predicate,t) \nonumber\\
\wrob(\signal, \lnot \phi, t) &= -\wrob(\signal,\phi, t) \nonumber\\
\wrob(\signal, \phi_1 \land^{\weight} \phi_2, t) &= \min \big (\weight_1\wrob(\signal, \phi_{1},t),  \weight_2\wrob(\signal, \phi_{2},t) \big ) \nonumber\\
\wrob(\signal, \phi_1 \U_{[\lb,\ub]}^{\weight^1,\weight^2} \phi_2, t) &= \hspace{-0.2cm} \max\limits_{t' \in [\lb,\ub]} \hspace{-0.2cm} \Big( \min \big( \weight^1_{t'} \hspace{-0.2cm}  \min\limits_{t'' \in [t,t+t')} \hspace{-0.2cm} \wrob(\signal,\phi_1,t''), \nonumber\\
& \quad \weight^2_{t'}\wrob(\signal, \phi_2, t+t')\big ) \Big )
\end{align}

The derived operators have the following definitions:
\begin{align}\label{eq:quantsemanticsdr}
\wrob(\signal, \phi_1 \lor^{\weight} \phi_2, t) &= \max \big (\weight_1\wrob(\signal, \phi_{1},t),  \weight_2\wrob(\signal, \phi_{2},t) \big ) \nonumber\\
\wrob(\signal, \G^{\weight}\phi, t) &= \min\limits_{t' \in [\lb,\ub]} \big(w_{t'} \wrob(\signal,\phi, t + t')\big) \nonumber\\
\wrob(\signal, \F^{\weight}\phi, t) &= \max\limits_{t' \in [\lb,\ub]} \big(w_{t'} \wrob(\signal,\phi, t + t')\big)
\end{align}
To denote the \wstlrob at $t=0$, we use $\wrob(\signal, \phi)$. As shown in \cite{Karagulle2024spl, Mehdipour2021wstl}, the quantitative semantics in \eqref{eq:quantsemantics} is sound. The \RT can be defined for WSTL formulas as well. Weights of the operators are added as costs to the relevant edges, and we obtain $\mathcal{T}_{\phi_{\wval},t} = (\set{N}_{\phi, t}, \set{E}_{\phi,t}, \wval)$.

\subsection{Parametric Weighted Signal Temporal Logic (PWSTL)}
To enable learning from data, we treat weights as parameters and define an extension to WSTL called Parametric Weighted Signal Temporal Logic (PWSTL) (cf. \cite{Yan2021stlnn}). We denote the set of unknown parameters as $\weightset$, and PWSTL formulas as $\phi_{\weightset}$. The weight set is the set (or a subset) of all possible weights in the formula. A PWSTL formula results in a WSTL formula $\phi_{\weightset=\wval}$ with the valuation $\wval$ of the parameters. For simplicity, if it is clear from the context, we write $\phi_{\wval}$ to denote the WSTL formula with valuation $\wval$. Unless noted otherwise, the weight set of the PWSTL version of an STL formula includes all possible weights.

\section{Problem Statement}\label{sec:pstament}
Learning from human feedback in autonomous systems encompasses problems such as learning from preferences, demonstrations, or rankings. Classical approaches include inverse reinforcement learning and behavioral cloning, which directly infer policies or reward structures from data \cite{Biyik2022learningreward, Griffith2013policyfromhf}. Both methods typically require a reinforcement learning-style framework. In contrast, we propose to learn a utility function that captures the preferences independent of a specific agent or environment. Ideally, this utility function serves as a unifying representation of human intent, and whenever trajectories consistent with these preferences are required, they can be synthesized via downstream trajectory generation methods. To restrict the search space of possible utility functions, we focus on a structured family of functions defined by the safety requirements or task specifications. Particularly, we leverage the expressive power of WSTL to describe specifications and the \wstlrob. By parameterizing weights in the quantitative semantics, we enable different weight valuations to encode potential instantiations of the utility function. The learned WSTL formula can be seamlessly integrated to control synthesis frameworks such as model predictive control (MPC), to generate safe and personalized trajectories. Moreover, as weight valuations represent the importance of subformulas or time instances, the result of the learning framework is interpretable, in contrast to using neural network-like structures for learning.

In other words, solving learning-from-human-feedback problems via WSTL requires a task description written in STL and a user dataset, and searches for weighted quantitative semantics that maximize the problem's objective. Formally, we define the problem as follows:
\begin{problem}[Learning from Human Feedback (LHF)]\label{problem:lhf}
Consider an STL formula $\phi$ and a dataset $\dataset$ obtained from user feedback. LHF aims to find a weight valuation $w^*$ for $\phi_{\weightset}$, the PWSTL version of the formula $\phi$ with $|\weightset|=n$, that maximizes the objective value $f(\phi_{\weight}, \dataset)$. That is, 
    \begin{equation}\label{eq:pref_learning}
        \begin{array}{ccc}
            \weight^* \in & \max\limits_{
            \weight \in \reals_{>0}^n
            } & f(\phi_{\weight}, \dataset) \\
            &\suchthat & \text{dataset constraints}
        \end{array}.
    \end{equation}
\end{problem}
The form of the objective function and dataset constraints depends on the learning setup and the available data. For safe preference learning, the user dataset is given as  $\preferencedata = \{(\signal_{i_1}, \signal_{i_2}, l_i )\}_{i=1}^P$, where $\signal_{i_1}$ and $\signal_{i_2}$ are compared and $l_i \in \{1,-1\}$ denotes its label: $l_i = 1$, when $\signal_{i_1}$ is preferred over $\signal_{i_2}$ and $l_i = -1$ otherwise. The goal is to maximize the number of correctly ordered pairs, leading to the objective function of the form $f(\phi_{\weight}, \preferencedata) = \sum_{i=1}^{P}\mathbb{1}_{\big(l_i\Delta_i(w) > 0\big) } $, and constraints $\Delta_i(w) = r(\signal_{i_1}, \phi_{\weight}) - r(\signal_{i_2}, \phi_{\weight})$. 

For learning to rank, the dataset is an ordered set $\set{R} = \{\signal_{1} \succeq  \signal_{2} \succeq \ldots \succeq \signal_{P}\}$. This can be reformulated as a pairwise preference dataset $\preferencedata = \{(\signal_{i}, \signal_{j}, 1 )\}_{i=1, j=i+1}^{P-1, P}$, reducing the problem to safe preference learning with the same objective function and $\binom{P}{2}$ constraints. Finally, in learning from demonstrations, we are given a set of ``desired" trajectories $\dataset = \{\signal_i\}_{i=1}^D$. In this setup, we can use $f(\phi_\weight, \dataset) = \sum_{i=1}^D \wrob(\signal_i, \phi_\weight)$ as the objective value, which aims to assign high weighted robustness values to desired trajectories. When dataset constraints are chosen as above, we can restrict the search domain for weights from strictly positive reals to a bounded region $a_1<\weight_i \leq a_2,\; i\in[1,n]$, where $0\leq a_1 < a_2$. The proof of feasibility equivalence can be found in Th. 2 \cite{Karagulle2024spl}. In all cases, the cost function may include regularization terms in the form of $f(\phi_{\weight}, \preferencedata)+\lambda|\weight|_p$, where $|\weight|_p$ denotes the $p$-norm of the vector of all weights, with a slight abuse of notation.

The computational complexity of WSTL synthesis problems grows with the complexity of the formula. In general, computing the \wstlrob for a signal involves evaluating predicate values at multiple time instances, multiplying them with corresponding weight valuations, and applying nested \texttt{min/max} operations, following the WSTL semantics. Since weights appear multiplicatively, optimizing over them results in a mixed integer program (MIP) with multi-linear constraints, where integers appear as in \cite{CaKaVa-NAHS-2025}. As such, weight synthesis problems are solved with heuristic approaches such as gradient descent \cite{Leung2023stlcg} or sampling \cite{Karagulle2024spl} in the literature.

\section{Reduction to MILP}

In this paper, we propose two procedures to reduce the problem size and linearize the constraints, making the problem easier to solve. The first procedure, called \textit{structural pruning}, systematically examines how values in the \robustnesstree are propagated to the root node to determine the robustness value of a signal with respect to a formula. Based on this observation, we can prune the branches that cannot affect the robustness value regardless of the weight valuations assigned to them. The second procedure, called the \emph{$\log$-transform}, leverages the product identity of the logarithm function to convert constraints in the multiplication form into constraints in the summation form. Since the logarithm is only defined for positive values, a naive approach requires that all subformulas (subtrees in \RT) must have positive robustness for the transformation to be valid, which is a restrictive requirement in practice. To address this, we combine the $\log$-transform with structural pruning, and ensure that only contributing parts with the same robustness value sign of the formula are considered in the robustness computation, therefore making all variables of the constraints that encode this computation have the same sign. Together, these two procedures allow the problem to be reformulated as an MILP with a potentially reduced number of constraints.

Before describing the procedures in detail, we note that all formulas are assumed to be in positive normal form (PNF), that is, negations appear only in front of predicates. While this assumption is not restrictive, as negations can be pushed to the predicate level, we discuss the rationale later.

\subsection{Structural Pruning}
The key insight of structural pruning is that if the overall robustness of a signal is positive (indicating satisfaction), then any subformula with negative or zero robustness values must be ``absorbed" by one of its parent operations in the \robustnesstree. Consequently, when fitting weights to this tree, we know that weights connecting nodes with negative or zero robustness value to their parents cannot change the robustness. This allows us to systematically prune parts of the tree whose values are not actively contributing to robustness. 

More formally, consider a signal $\signal$, an STL formula $\phi$, and the associated \RT $\mathcal{T}_{\phi, t}$. If $\rob(\signal, \phi) > 0$, then only the subtrees $\mathcal{T}_{\phi_i, t'}$ with positive robustness values, i.e., $\rob(\signal, \phi_i, t') > 0$, can potentially determine the final robustness value at the root node of $\mathcal{T}_{\phi,t}$ when weights are introduced to the formula. Similarly, if $\rob(\signal, \phi) < 0$, then only subtrees with negative robustness values affect the final robustness. Note that if $\rob(\signal, \phi) = 0$, then there does not exist a weight valuation that can change the \wstlrob values. This observation leads to our structural pruning algorithm, which is recursively defined over the \robustnesstree, and shown in Algorithm~\ref{alg:structural_pruning}. 

\begin{algorithm}[!htb]
\caption{Structural Pruning}
\label{alg:structural_pruning}
\SetKwProg{Fn}{Function}{:}{end}
\Fn{Prune($\signal, \phi, t, s_\phi$)}{
    \eIf{Root node of $\set{T}_{\phi,t}$ is an Operator}{
        \For{$(\phi_i,t') \in$ Children of the Root}{
            \eIf{$\sign(\rob(\sigma,\phi_i,t')) = s_\phi$}{
                $\set{T}_{\phi_i,t} \gets$ Prune($\signal, \phi_i, t', s_{\phi_i}$)
            }{
                $\set{T}_{\phi,t} \gets \set{T}_{\phi,t} \setminus \set{T}_{\phi_i,t'}$ 
            }
        }
    }{
        \Return $\set{T}_{\phi,t}$
    }
}
\vspace{-0.2em}
\end{algorithm}

The algorithm is given a signal, denoted $\signal$, a formula, denoted $\phi$, the time instance where the robustness is computed, denoted $t$, and the sign of the robustness value, denoted $s_\phi = \sign(\rob(\signal, \phi, t))$. The procedure starts from the root node of the \RT. If this node is an operator, i.e., not a predicate node, then for all children of the operator (line 3), we check the sign of the robustness value of the signal with respect to that child (line 4). If the sign of the child matches with $s_\phi$, then we recursively prune the child subtree (line 5); otherwise, we remove this child subtree, since it cannot yield a better value than other children in the min/max operation (line 7). Finally, if we reach a predicate node, as it has the same robustness sign as $s_\phi$, we keep that leaf-node (line 11). Note that this algorithm returns at least one node. 

\begin{theorem}\label{theorem:pruning}
    Given a signal $\signal$ and a formula $\phi$, structural pruning preserves the quantitative semantics of STL formulas. That is, the robustness value computed from the pruned \RT is $\rob(\signal,\phi,t)$.
\end{theorem}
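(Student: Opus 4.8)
The plan is to prove the statement by structural induction on the \RT $\mathcal{T}_{\phi,t}$, exploiting the fact that---because $\phi$ is in PNF---every non-leaf node of the tree evaluates to either a minimum or a maximum of its children's robustness values. Conjunction and $\G$ are min-nodes, disjunction and $\F$ are max-nodes, and, following \eqref{eq:quantsemantics}, the $\U$ operator expands into a max-node over $t'$ whose children are binary min-nodes, one branch of which is itself a min-node over $t''$. Since negations never appear internally, no node flips the sign of its subtree, and the entire computation of $\rob(\signal,\phi,t)$ is a composition of $\min$ and $\max$ applied to the leaf predicate values. It therefore suffices to reason locally at each such node.

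The core of the argument is a single-node pruning lemma. Let an internal node compute $v = \bigoplus_{i} v_i$ with $\bigoplus \in \{\min,\max\}$ and $v_i = \rob(\signal,\phi_i,t')$, let $s = \sign(v)$, and let $I = \{i : \sign(v_i) = s\}$ be the children the algorithm retains. I would show $\bigoplus_{i \in I} v_i = v$ by a short case analysis on the operator type and on $\sign(v)$. For a min-node with $v>0$ all children are positive and nothing is pruned; with $v<0$ only positive or zero children---which cannot attain the minimum---are removed, so the argmin survives; with $v=0$ no negative children exist and the positive ones cannot attain the minimum, leaving only zero-valued children. The max-node cases are dual. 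In every case the extremizing child is retained, so the local value is unchanged.

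Two bookkeeping points then close the induction. First, by construction the algorithm recurses on each retained child $\phi_i$ with argument $s_{\phi_i} = \sign(\rob(\signal,\phi_i,t'))$, the true sign of that child's robustness, so the inductive hypothesis applies verbatim and yields that the pruned subtree of $\phi_i$ computes $\rob(\signal,\phi_i,t')$ exactly. Second, the base case is a predicate leaf, which the algorithm returns untouched and which trivially evaluates to $\rob(\signal,\phi,t)$. Combining the inductive hypothesis on the retained children with the pruning lemma at the root gives $\bigoplus_{i\in I}\rob(\signal,\phi_i,t') = \bigoplus_i \rob(\signal,\phi_i,t') = \rob(\signal,\phi,t)$, which completes the induction.

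The step I expect to be the main obstacle is the $\sign(v)=0$ case, together with verifying that an extremizing child always lies among the retained ones. When $v=0$ the algorithm keeps only zero-valued children, and one must check both that at least one such child exists (so the pruned tree is nonempty and its value well-defined, consistent with the remark that the algorithm returns at least one node) and that the remaining same-sign children do not perturb the extremum. This is precisely where the convention $\sign(0)=0$ and the observation that wrong-sign children are never extremizers must be handled carefully, and it dovetails with the earlier remark that a zero-robustness formula admits no weight valuation altering its \wstlrob. The nested structure of $\U$ merely requires applying the single-node lemma at each of its $\min$/$\max$ layers rather than once, which is routine once the lemma is established.
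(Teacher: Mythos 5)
Your proof is correct and takes essentially the same route as the paper's: the paper's (two-sentence) argument likewise observes that a child whose robustness sign differs from its parent's cannot be the extremizer of the $\min$/$\max$ at that node, and applies this recursively from root to leaves. Your version is simply a more careful elaboration of that sketch, making explicit the per-operator case analysis, the $\sign(v)=0$ edge case, and the nonemptiness of the retained child set, all of which the paper leaves implicit.
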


The proof follows the fact that for any operator, a child whose robustness has an opposite sign from the parent cannot affect the $\min$ or $\max$ computation. By recursively applying this argument from root to leaves, all removed nodes are non-influential, so the robustness of the pruned formula equals that of the original formula.

\addtocounter{example}{-1}
\begin{example}[continued]
    Consider $\phi$, and $\signal = [5,6,7,-1,4,2]$. Figure~\ref{fig:pruned_rt} shows the pruned \RT, $\mathcal{T}_{\phi, t}^{pruned}$. 
\end{example}

Thus far, we have defined structural pruning for STL formulas. Because WSTL semantics are also sound, meaning weights cannot change the sign of the robustness values, the same procedure can be directly applied to WSTL (and PWSTL) formulas, allowing us to prune irrelevant weights as well. Since robustness values with opposing signs do not contribute to the final \wstlrob value, varying their associated weights cannot change the \wstlrob of the signal. Therefore, these weights are not active decision variables. We can then write the robustness constraints based on the pruned \RT of the WSTL formula.

\subsection{The $\log$-transform of constraints}
The $\log$-transform is motivated by the desire to linearize the constraints related to the robustness computation in weight synthesis problems. Prior work \cite{Raman2014mpcstl, CaKaVa-NAHS-2025} demonstrates that constraints of the form $\rob(\signal, \phi)>0$ can be encoded as a series of mixed-integer linear constraints. In the case of the \wstlrob, however, the temporal and logical operators multiply their operands’ robustness values by the associated weights (see Eq.~\eqref{eq:quantsemantics}). Encoding these operations directly, following \cite{Mehdipour2021wstl, CaKaVa-NAHS-2025}, introduces multi-linearity whenever weights are treated as decision variables.

To address this, we apply a logarithmic transformation to both sides of mixed multi-linear integer constraints and convert the products into sums, thus linearizing them with respect to the weights. While effective in linearizing the product terms, the logarithm is only defined over the positive domain. In its most naive form, this requires both sides of every constraint to be positive. In other words, this transformation is valid only for signals that satisfy all predicates at all time instances that are used in the robustness computation (therefore, in constraints), which is not realistic. Before introducing our method for extending the transform to arbitrary signals (independent of satisfaction or violation), we first define the $\log$-transform of the robustness functions in Eq.~\eqref{eq:quantsemantics} under the assumption that all predicate values are strictly positive. The $\log$-transform of the robustness of signal $\signal$ at time $t$ is defined recursively as: 
\begin{equation}
\label{eq:logr}\arraycolsep=1pt
\begin{array}{rl}
    \log\wrob(\signal, \top, t) &= \infty, \\
    \log\wrob(\signal, \predicate, t) &= \log(h_\predicate(\signal(t))),\\
    \log\wrob(\signal, \phi_1 \wedge^{w} \phi_2, t) &= \min\limits_{i \in \{1,2\}}\big(\log(w_i) + \log\wrob(\signal, \phi_i, t)\big),\\
    \log\wrob(\signal, \phi_1 \U_{[\lb,\ub]}^{w^1,w^2} \phi_2, t)) &= \max\limits_{t' \in[\lb,\ub]} \Big(\min\big(\\ &\log(w^1_{t'}) + \log\wrob(\signal, \phi_2, t+t'),\\
    &\log(w^2_{t'})+\hspace{-0.5em} \min\limits_{t'' \in [t, t+t']}\hspace{-0.5em}\log\wrob(\signal, \phi_1, t'')\big)\Big).\\
\end{array}
\end{equation}
The derived operators are defined as: 
\begin{equation}
\label{eq:logr_derived}\arraycolsep=1pt
\begin{array}{rl}
    \log\wrob(\signal, \phi_1 \vee^{w} \phi_2, t) &= \max\limits_{i \in \{1,2\}}\hspace{-0.3em}\big(\log(w_i) + \log\wrob(\signal, \phi_i, t)\big),\\
    \log\wrob(\signal, \F_{[\lb,\ub]}^{w} \phi, t)&= \max\limits_{t' \in [\lb,\ub]}\hspace{-0.3em}\big(\log(w_{t'}) + \log\wrob(\signal, \phi, t+t')\big),\\
    \log\wrob(\signal, \G_{[\lb,\ub]}^{w} \phi, t)&=\min\limits_{t' \in [\lb,\ub]}\hspace{-0.3em}\big(\log(w_{t'}) + \log\wrob(\signal, \phi, t+t')\big).
\end{array}
\end{equation}

We can still convert the right-hand sides of Eqs.~\eqref{eq:logr} and~\eqref{eq:logr_derived} into a series of mixed-integer linear constraints using \cite{CaKaVa-NAHS-2025}. Note that we do not define the $\log$-transform for the robustness computation of negation. The restriction arises due to the domain in which the logarithm is defined: for a formula $\phi$, if $\log(\wrob(\signal,\phi))$ is defined, then $\log(\wrob(\signal, \lnot \phi))$ is not, and vice versa. In consequence, negation is not allowed in the formulas, which restricts our attention to formulas in PNF only. Negation in front of predicates is not an issue since it can be carried inside predicates by reversing the inequality.

When applying the $\log$-transform to constraints in Problem~\ref{problem:lhf}, $\log(h_\predicate(\signal(t)))$ are computed a priori. The decision variables are $\log(w_i)$. We perform a change of variables $v_i = \log(w_i)$ during optimization, leaving the constraints $\log$-free. We can recover the original weights via $w_i = \exp(v_i)$ afterward. Similarly, the robustness of the weighted formula can be recovered as $\wrob(\signal, \phi, t) = \exp(\log \wrob(\signal, \phi_v, t))$ directly from the transformed constraints, without recomputing it from the original constraints. Next, we show that applying $\log$-transformation preserves the maximizer of Problem~\ref{problem:lhf}.

\begin{theorem}\label{theorem:logtr}
    Consider Problem~\ref{problem:lhf} with an STL formula $\phi$ and a dataset $\dataset$. Assume $\dataset$ consists of signals that satisfy all predicates of $\phi$ at all time instances that are required to compute $\rob(\signal, \phi)$. Then, applying the $\log$-transform to the robustness constraints does not change the optimizer of Problem~\ref{problem:lhf}. 
\end{theorem}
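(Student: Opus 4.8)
The strategy is to view the $\log$-transform as a bijective reparametrization and to exploit the fact that $\log$ is a strictly increasing bijection from $(0,\infty)$ onto $\reals$. Under the stated assumption every predicate value $h_\predicate(\signal(t))$ used in computing $\wrob(\signal,\phi)$ is strictly positive; since the WSTL semantics only multiply positive weights and apply $\min/\max$, every intermediate \wstlrob value appearing in the recursion is also strictly positive. Hence $\log$ is well defined at every node of the \RT, and Eq.~\eqref{eq:logr} is applicable throughout. The plan is: (i) prove by structural induction that Eq.~\eqref{eq:logr} computes exactly $\log\big(\wrob(\signal,\phi,t)\big)$; (ii) observe that the change of variables $v_i=\log(w_i)$ is a bijection between $\reals^n_{>0}$ and $\reals^n$; and (iii) conclude that the objective value is unchanged along this bijection, so the optimizer is preserved.

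For step (i), the base case is a predicate: $\log\wrob(\signal,\predicate,t)=\log(h_\predicate(\signal(t)))=\log\big(\rob(\signal,\predicate,t)\big)$, which is well defined by positivity. For the inductive step I would use the three identities that hold precisely because all arguments are positive and $\log$ is strictly increasing: $\log(\min(a,b))=\min(\log a,\log b)$, $\log(\max(a,b))=\max(\log a,\log b)$, and $\log(w\cdot r)=\log(w)+\log(r)$. Applying these to the conjunction and ``Until'' clauses of Eq.~\eqref{eq:quantsemantics} reproduces exactly the corresponding clauses of Eq.~\eqref{eq:logr}, and the derived operators follow identically. Strict monotonicity also guarantees that the index attaining each $\min$ or $\max$ is unchanged after taking logarithms, so the two recursions select the same branches. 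This yields $\log\wrob(\signal,\phi,t)=\log\big(\wrob(\signal,\phi,t)\big)$ for every subformula, and in particular $\wrob(\signal,\phi)=\exp\big(\log\wrob(\signal,\phi_v)\big)$ whenever $w=\exp(v)$.

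For steps (ii)--(iii), the map $v\mapsto\exp(v)$ is a bijection $\reals^n\to\reals^n_{>0}$ with inverse $w\mapsto\log(w)$, so optimizing over $v\in\reals^n$ is equivalent to optimizing over $w\in\reals^n_{>0}$. By step (i), for $w=\exp(v)$ the transformed constraints recover the true weighted robustness values, so the objective $f(\phi_w,\dataset)$ evaluated at $w$ equals that of the transformed problem evaluated at $v=\log(w)$. For the preference (and hence ranking) objective this is even more direct: since $\log$ is strictly increasing and both robustness values are positive, for every pair $i$ we have $\sign\big(l_i\Delta_i(w)\big)=\sign\Big(l_i\big(\log\wrob(\signal_{i_1},\phi_w)-\log\wrob(\signal_{i_2},\phi_w)\big)\Big)$, so each indicator $\mathbb{1}_{(l_i\Delta_i(w)>0)}$ --- and thus the whole sum --- takes the same value in the original and the log domains. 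Therefore $w^*$ maximizes the original problem if and only if $v^*=\log(w^*)$ maximizes the transformed one, and the optimizer is recovered via $w^*=\exp(v^*)$.

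The main obstacle is conceptual rather than computational: the preference objective aggregates \emph{differences} of robustness values, so one might worry that the $\log$-transform, which turns products into sums, distorts it. The resolution is that this objective depends only on the \emph{ordering} of the two robustness values, and any strictly increasing transformation --- $\log$ on the positive domain in particular --- preserves orderings; this is exactly where the positivity assumption is indispensable, both to keep $\log$ defined throughout the recursion and to preserve the branch selected by each $\min/\max$. Once order-preservation and the bijectivity of $v=\log(w)$ are established, the invariance of the optimizer is immediate.
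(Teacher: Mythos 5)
Your proof is correct and rests on the same key fact as the paper's: the strict monotonicity of $\log$ on $\reals_{>0}$ commutes with every $\min$/$\max$ in the \RT and hence preserves the branch selections and orderings, so the optimizer is unchanged. Your write-up is more complete than the paper's three-sentence argument --- in particular the explicit structural induction, the bijectivity of $v=\log(w)$, and the observation that the indicator objective depends only on the sign of $\Delta_i$ (which $\log$ preserves) are all points the paper leaves implicit --- but it is an elaboration of the same route rather than a different one.
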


\begin{proof}
The logarithm is a strictly increasing function, and therefore preserves the ordering of all $\min$ and $\max$ operations in the formula. That is, for any set of $a_1, \ldots, a_n \in \reals_{>0}$, we have $\log(\min(a_1, \ldots, a_n)) = \min(\log(a_1), \ldots, \log(a_n))$, and similarly for $\max$. Hence, while the numerical values of each operation in the \robustnesstree change, the optimizers remain unchanged.
\end{proof}

We now discuss how to relax the positivity assumption, i.e., handling signals whose robustness computation constraints contain non-positive values. Two cases need to be considered: signals with all negative values, and signals with mixed values.

For signals whose robustness computation contains only negative robustness values, e.g., signals which violate all predicates of a formula at all time instances needed for the robustness computation, we cannot use $\log$-transform directly. However, because all these values lie in the negative quadrant, we can still compare them using the logarithm of their absolute values. Therefore, we can separate the sign from the magnitude, take the logarithm of the absolute value and multiply it with negative before performing the \texttt{min/max} operations, obtaining equations such as $\min_{i \in \{1,2\}}(-(\log(w_i)+\log(|\wrob(\signal,\phi_i,t)|))$. Equivalently, this can be handled by reversing the operator: for example, taking $\max$ instead of $\min$ when the values are negative, and removing the negative sign.

For signals whose robustness computation contains a mix of positive and negative values, we leverage the structural pruning algorithm to unify the signs in the computation. Applying structural pruning to a signal's robustness computation allows us to remove any subtree that results in a robustness value whose sign is different than the sign of the robustness value at the root node. Therefore, pruning is a \emph{required} step for the $\log$-transform to be applicable, as it ensures that all remaining constraints have robustness values with the same sign. As in the all-positive case, we show that applying these two procedures sequentially in a learning-from-human-feedback problem does not change the maximizer of the problem.

\begin{theorem}
Consider Problem~\ref{problem:lhf} with an STL formula $\phi$ and a dataset $\dataset$. Applying structural pruning to the signals in $\dataset$ and the $\log$-transform to the robustness constraints does not change the optimizer of the problem.
\end{theorem}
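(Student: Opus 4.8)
The plan is to assemble the result from the two preceding theorems, using soundness of WSTL as the bridge between them. The first observation is that structural pruning (Algorithm~\ref{alg:structural_pruning}) branches only on the \emph{signs} of subformula robustness values, and by soundness of the \wstlrob (as noted after Theorem~\ref{theorem:pruning}) no weight valuation can change these signs. Hence the pruned \RT, call it $\mathcal{T}_{\phi,t}^{pruned}$, is the \emph{same} reduced tree for every $\weight$, and the set of active weight variables is fixed before optimization. By Theorem~\ref{theorem:pruning}, extended to WSTL as discussed, this pruned tree computes exactly $\wrob(\signal,\phi_\weight)$ for every $\weight$. Therefore pruning alone leaves each $\wrob(\signal_i,\phi_\weight)$, and hence every $\Delta_i(\weight)$ and the objective $f(\phi_\weight,\dataset)$, unchanged as functions of $\weight$: Problem~\ref{problem:lhf} has an identical feasible set and objective before and after pruning, so its optimizer is preserved.

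Next I would fix a signal and argue over its now sign-homogeneous computation. After pruning, every robustness value surviving in the computation shares the common sign $s=\sign(\wrob(\signal,\phi_\weight))$, which is again weight-invariant. In the all-positive case ($s=+$) the argument of Theorem~\ref{theorem:logtr} applies verbatim: $\log$ is strictly increasing and commutes with $\min$ and $\max$, so under the change of variables $v_i=\log(\weight_i)$ the numerical values change but the maximizer does not. In the all-negative case ($s=-$) I would factor each value as $-\lvert\cdot\rvert$; since $\min$ (resp.\ $\max$) over negative numbers equals minus the $\max$ (resp.\ $\min$) over their magnitudes, taking logarithms of the magnitudes reproduces exactly the mirrored recursion described in the text, with the roles of $\min$ and $\max$ swapped and a global sign carried outside. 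Because $r\mapsto\log\lvert r\rvert$ is again strictly monotone, the interior optimizer over $v$ is preserved in this branch as well.

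Finally I would lift these per-signal statements to the objective of Problem~\ref{problem:lhf}. The objective depends on $\weight$ only through the per-pair comparisons $\sign(\Delta_i(\weight))=\sign\big(\wrob(\signal_{i_1},\phi_\weight)-\wrob(\signal_{i_2},\phi_\weight)\big)$. For each pair, the two robustness signs are weight-invariant. If they differ, the comparison is decided by signs alone and is untouched by any transform; if they agree, the comparison reduces to comparing $\log$-magnitudes — order-preserving when both are positive and order-reversing when both are negative — and in either case the transformed constraints encode the correct direction of the inequality. Hence each indicator $\mathbb{1}_{(l_i\Delta_i(\weight)>0)}$ is preserved, so $f$ is unchanged as a function of $\weight$ and the maximizer is unchanged. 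The learning-to-rank objective is a special case, and for the demonstration objective $\sum_i \wrob(\signal_i,\phi_\weight)$ monotonicity of $\exp$ recovers each robustness value from its transform, giving the same conclusion.

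The main obstacle is conceptual rather than computational: making the combined procedure well-posed. Structural pruning is defined through robustness signs, so the argument only goes through once one verifies that these signs — and therefore both the pruned tree structure and the set of active weights — are invariant across all weight valuations; this is precisely what soundness of WSTL supplies and is the linchpin that allows pruning to be performed once, before optimization. The second delicate point is the all-negative branch, where the monotone map $r\mapsto\log\lvert r\rvert$ reverses order: care is needed to confirm that swapping $\min\leftrightarrow\max$ in the transformed constraints faithfully reproduces the original ordering, so that the maximizer over $v$, and hence over $\weight=\exp(v)$, is genuinely unchanged.
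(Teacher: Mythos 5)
Your proposal is correct and follows essentially the same route as the paper, which simply composes Theorem~\ref{theorem:pruning} and Theorem~\ref{theorem:logtr} in a three-sentence argument. Your version is considerably more careful than the paper's — in particular, the observation that WSTL soundness makes the pruned tree and the active weight set invariant across all weight valuations, and the explicit treatment of the all-negative branch via $r \mapsto \log\lvert r\rvert$ with $\min$/$\max$ swapped, are details the paper leaves implicit in the surrounding text rather than in the proof itself.
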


\begin{proof}
The proof directly follows Theorems~\ref{theorem:pruning} and~\ref{theorem:logtr}. Structural pruning removes only subtrees that cannot influence the overall robustness of each signal. The $\log$-transform preserves the ordering of all $\min$ and $\max$ operations for the remaining positive or negative values. Therefore, the combination of pruning and logarithmic transformation does not change the optimizer of the original problem.
\end{proof}

With the structural pruning and log-transform procedures, we substantially reduce the problem size and convert the inherently multi-linear constraints into a linear form, yielding a computationally efficient method for optimally solving learning from human feedback problems. Specifically, the maximizer of Problem~\ref{problem:lhf} $\weight^*$, produces a WSTL formula $\phi_{\weight^{*}}$ whose qualitative semantics ensure that safety constraints are never violated—regardless of the learned weights. Furthermore, each learned weight in $\weight^*$ provides direct interpretability, as its value quantifies the relative importance of the corresponding subformula or time instance in determining the overall robustness. Thus, our approach not only guarantees safety but also facilitates transparent personalization by highlighting which task components most strongly reflect the user's preference. 

\section{Experiments}
To evaluate the performance of the proposed method, we focus on two experiments: safe preference learning for robot navigation tasks, and interpretable learning to rank using real-life Formula 1 data. The implementation details are available at \url{https://github.com/ruyakrgl/human2wstl.git}.

\subsection{Safe Preference Learning for Robot Navigation}
In this experiment, we consider a simple robot navigation task. The robot starts at $(x, y)=(1,1)$, must visit either region $A = [7,9]\times[1,3]$, or region $B = [1,3]\times[7,9]$, within the first $10$ seconds. It should then proceed to region $C=[7,9]\times[7,9]$ within the next $10$ seconds while always staying in the environment bounds $E = [0,10]\times[0,10]$, and avoiding the unsafe region $U=[3,6]\times[3,6]$. This task is represented by the STL formula $\phi = \F_{[0,10]}((x,y) \in A \lor (x,y) \in B) \land \F_{[10,20]}\G((x,y) \in C) \land \G_{[0,20]}\lnot((x,y) \in U) \land \G_{[0,20]}((x,y) \in E)$. The robot follows a double-integrator model: 
\begin{equation}
q[t+1] = \mat{1 & 1 & 0 & 0 \\ 0 & 1 & 0 & 0 \\ 0 & 0 & 1 & 1 \\ 0 & 0 & 0 & 1 }q + \mat{0\\ 1\\ 0\\ 1}u, \quad u = \mat{a_x \\ a_y},
\end{equation}
where the state $q[t] = \mat{x& v_x& y& v_y}^T$ represents robot's current position and velocity. We begin by randomly sampling weight valuations ten times to generate a trajectory for each of the WSTL formulas using PyTelo \cite{cardona2023flexible}, and form five trajectory pairs. From these, we construct three preference datasets: PD1, the original set of preferences; PD2, obtained by flipping the preference of a single pair in PD1; PD3, obtained by inverting all preferences in PD1. For each preference dataset, we solve the MILP problem to learn the weight valuations that best represent the given preferences, yielding formulas $\phi_{\weight}^{\text{PD}i}$. Finally, we synthesize trajectories that will maximize the \wstlrob of the corresponding $\phi_{\weight}^{\text{PD}i}$. Figure~\ref{fig: robots} shows the synthesized trajectories. The results demonstrate that our method is responsive to even small preference changes and captures this diversity by producing distinct task executions.

    \begin{figure}[hbt!]
        \centering
        \includegraphics[width=0.7\linewidth]{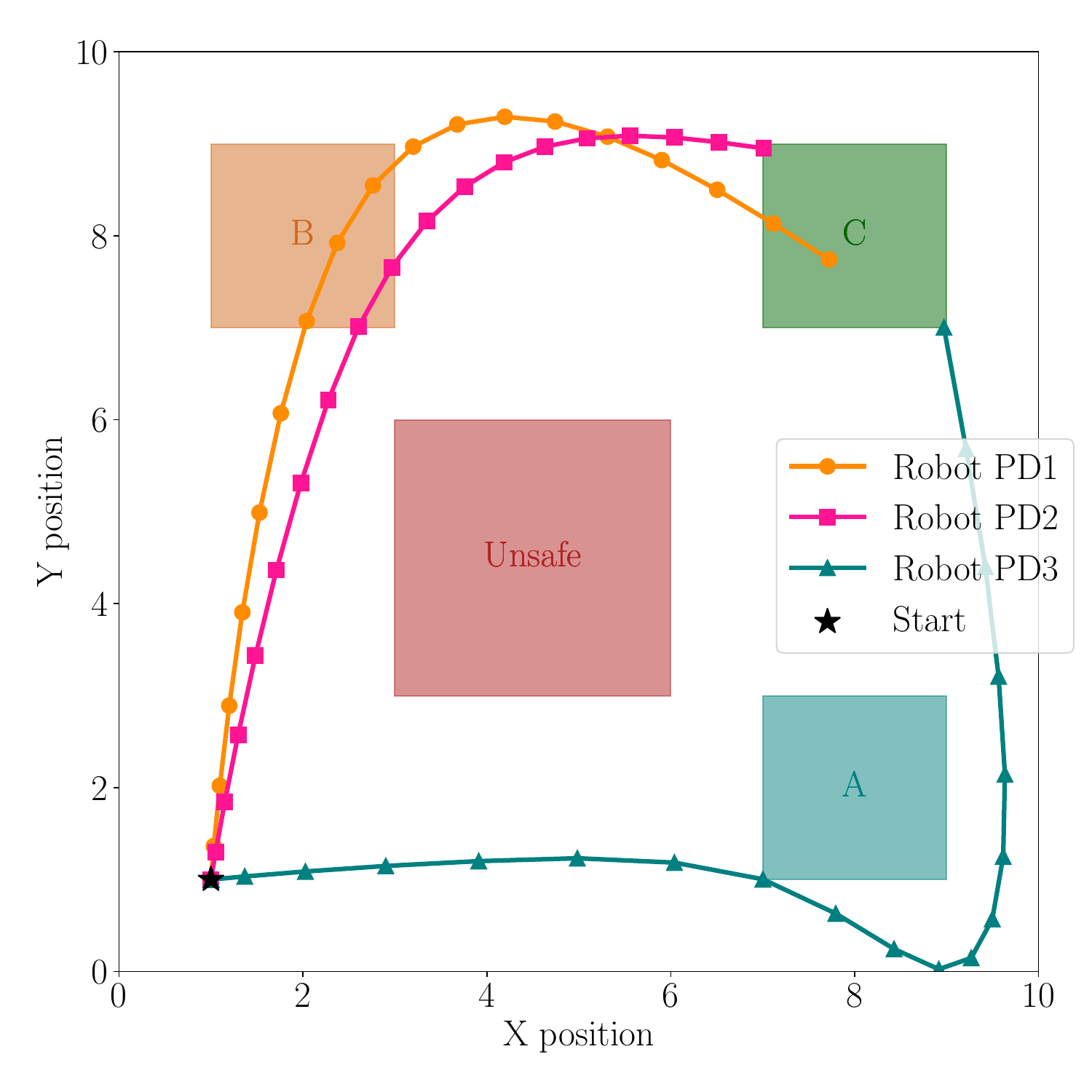}
        \vspace{-0.3cm}
        \caption{Trajectories generated using three different preference sets. PD1 is the original preference set, PD2 is obtained by flipping the answer to a single pair in PD1, and PD3 is obtained by reverting all answers in PD1.}
        \label{fig: robots}
        \vspace{-0.6cm}
    \end{figure}

\subsection{Learning to Rank}
For the learning-to-rank example, we consider a real-world motorsport setting: Formula 1. Formula 1 is a wheel-to-wheel racing championship where the goal is to finish in the highest position after a fixed number of laps. Drivers are ranked based on their finishing times, and each team, with two drivers per race, heavily relies on on-track data to determine their race strategies, such as pit stops, tire compound selection, overtake plans, etc. Our goal is to learn a WSTL formula that captures the important aspects of racing on a specified track using previous years’ race data and ranking results. The learned WSTL formula represents an ideal race performance. Note that, although this method is introduced as safe learning, this example shows that the approach can be extended to any kind of task execution guarantee written in STL.

A good race performance can be characterized by: starting from a competitive grid position, finishing the race, recording consistently fast lap times, outperforming (and overtaking) the car ahead, executing as few efficient pit stops as necessary, and avoiding on-track issues that affect the lap time, such as collisions and penalties \cite{choo2015racepredict}. To encode these principles in STL formalism, we consider several per-lap signals obtained via FastF1, a Python library for analyzing Formula 1 data \cite{oehrlyfastf1}. Let $N$ be the number of laps run in a race. The percentage of laps completed by a car is $r_{lap}\in [0, 1]^{N}$. If a car retires early or is lapped, $r_{lap}(k)<1$, for all laps $k \in \{1, \ldots, N\}$. The time-in-pit $t_{pit} \in \reals^{N}$ records the duration of a pit stop on each lap, and is computed from the car’s entry and exit times in the pit lane (zero if no pit stop occurs). The fuel-and-pit-corrected lap times $\bar{t}_{lap} \in \reals^{N}$ adjust the raw lap times for both pit-stop duration and the reduction in car mass due to fuel consumption, and is computed for each lap $k \in \{1, \ldots, N\}$ as $\bar{t}_{lap}(k) = t_{raw}(k)- (m_{full} - c_{kg/lap}k)s_{s/kg}-t_{pit}(k)$ where $m_{full}=100$ represents the mass of the fully fueled car, $c_{kg/lap}$ represents the fuel use per each lap, and $s_{s/kg} = 0.033$ represents mass sensitivity of lap time \cite{Heilmeier2018f1}. The position $p$ records the current race ranking of the car. The delta-to-lead $t_{\Delta lead}$ records the lap time differences between the ego and the car in front. Finally, the safety car presence $b_{SC} \in\{-1, 1\}^N$ is set to $1$ during the Safety Car and Yellow Flag periods (i.e., when cars reduce speed) and $-1$ otherwise.

Using these signals, we define the formula characterizing the performance as $\phi = \phi_{progress}\land \phi_{grid} \land \phi_{\Delta lead} \land \phi_{lap} \land \phi_{pit}$, where $\phi_{progress} = \F(r_{lap} \geq 0)$ represents the effect of the progress,
$\phi_{grid} = \G_{[0,1]}(p \leq 20)$ represents the effect of the starting position, $\phi_{\Delta lead} = \G(\lnot b_{SC} \implies t_{\Delta lead} \leq 30s)$ represents the effect of competitiveness, $\phi_{lap} = \G(\lnot b_{SC} \implies \bar{t}_{lap}\leq 150s)$ represents the effect of the lap times excluding the reduce-speed periods, and $\phi_{pit}=\G(t_{pit} > 0 \implies t_{pit}\leq 30s)$ represents the effect of having efficient pit stops. To compare and analyze the contribution of the formula, we also use $\tilde{\phi} = \phi_{progress}\land \phi_{grid} \land \phi_{\Delta lead} \land \phi_{pit}$. Note that $\tilde{\phi}$ is the same except $\phi_{lap}$ is not in the specification.

\begin{figure*}[hbt!]
    \centering
\begin{subfigure}[t]{0.49\linewidth}
        \centering
    \includegraphics[width=\linewidth]{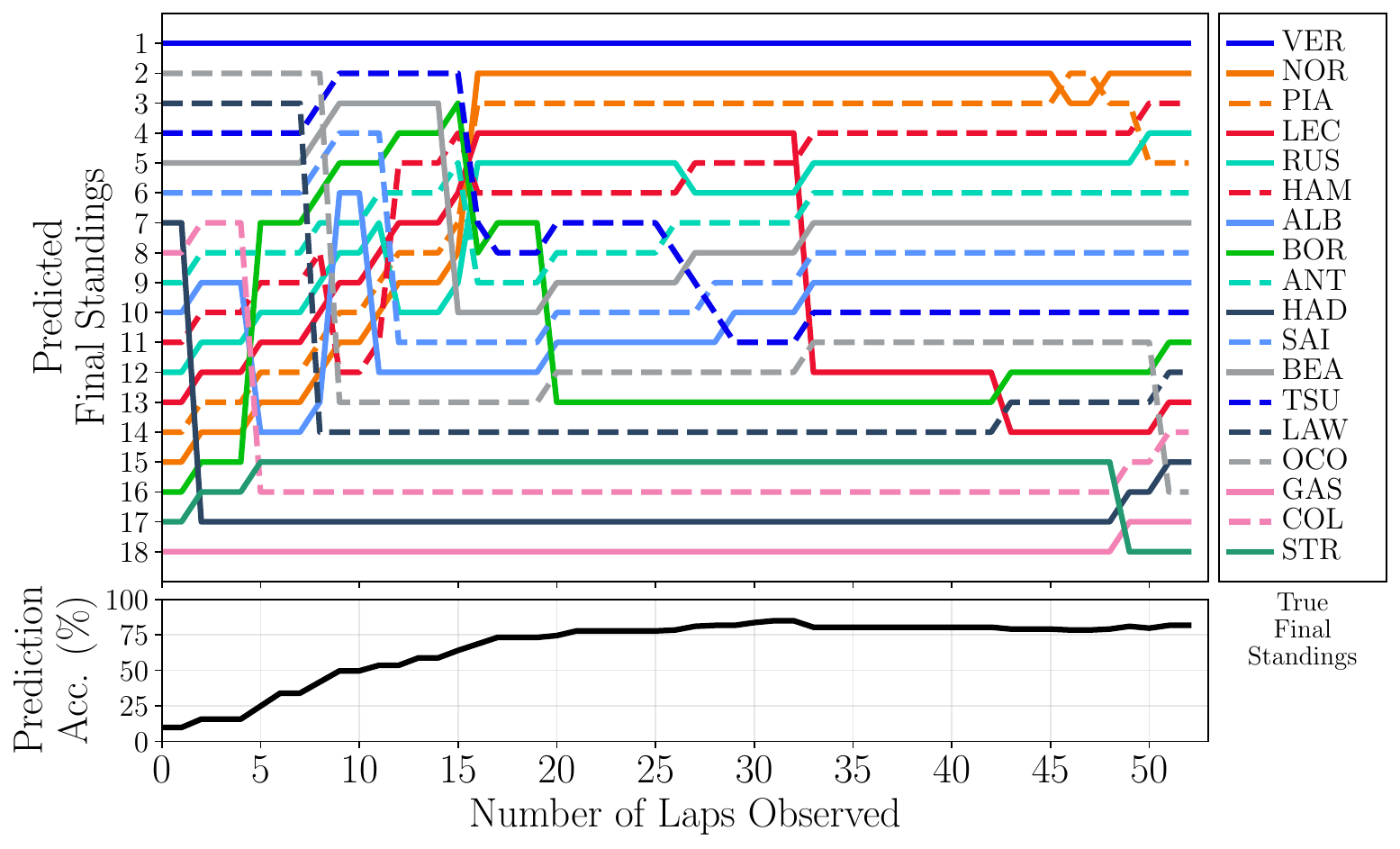}
    \caption{DNF/DNS excluded}
    \vspace{-0.2cm}
    \label{fig:dnf_excl}
\end{subfigure}
\begin{subfigure}[t]{0.49\linewidth}
        \centering
    \includegraphics[width=\linewidth]{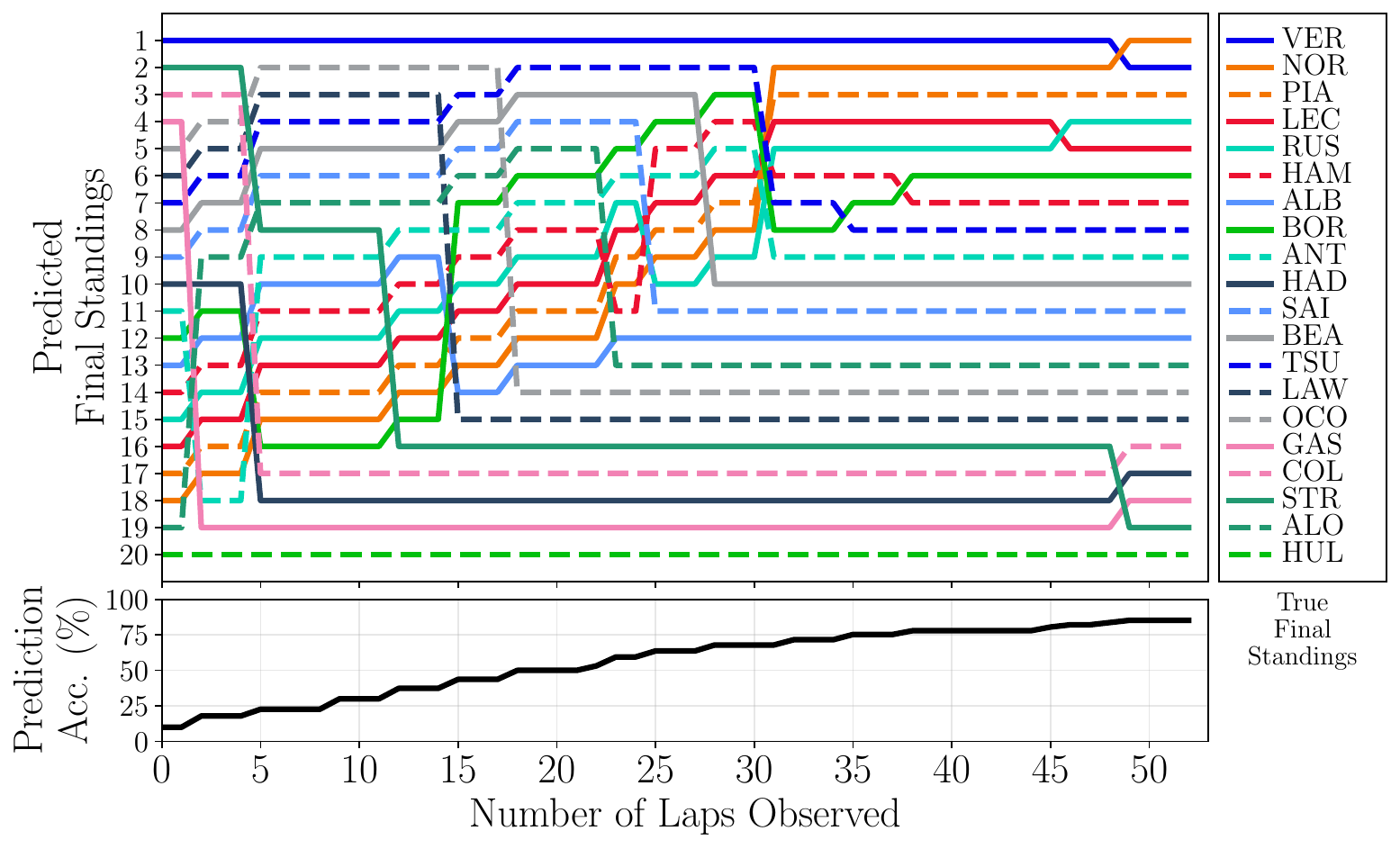}
    \caption{DNF/DNS included}
    \vspace{-0.2cm}
    \label{fig:dnf_incl}
\end{subfigure}
    \caption{The evolution of the final standing predictions at the 2025 Monza Grand Prix when (a) DNF and DNS are excluded, and (b) DNF and DNS are included. For each subfigure, the plot on the left shows the predictions after each lap, and the block on the right shows the true final standings. Each team is represented with a separate color, and drivers from the same team are represented with solid and dashed lines. For readability, driver abbreviations are used instead of enumerations.}
    \label{fig:f1_results}
    \vspace{-0.4cm}
\end{figure*}

To evaluate the accuracy of the ranking learner, we use the normalized Kendall-$\tau$ distance, which measures the percentage of discordant pairs, i.e., incorrectly-ordered pairs, in all pairwise comparisons $(i,j)$ where $i \in \{1, \ldots, P-1\}$ and $j \in \{i+1, \ldots, P\}$. That is, $$\text{acc}((\signal_1, \ldots, \signal_P)) = 1 - \frac{\text{number of discordant pairs}}{{\binom{P}{2}}}.$$

\textit{Experiment details}: To ensure consistent racing conditions over the years, such as weather and safety car appearances, we focus on the Monza Grand Prix. Since there are no major regulation changes in recent years, we use data from 2021, 2022, and 2024 for training, and the 2025 race for testing. The 2023 Monza Grand Prix is excluded due to a false start that shortened the race to $51$ laps instead of the usual $53$. For both training and test data, we scale all signals and thresholds at each year to $[0,1]$ by min-max normalization, to avoid biases due to the order-of-magnitude difference across channels. Because real races involve unpredictable events such as collisions and technical failures, we perform two sets of experiments: one excluding Did Not Finish (DNF) and Did Not Start (DNS) cars and one including them.

For both cases, we warm-start the MILP solver by selecting the best-accuracy solution among $S = 10000$ uniform random weight samples (cf. Random Sampling (RS) method of \cite{Karagulle2024spl}). 
We set the cost function as defined in Sec.~\ref{sec:pstament} with the regularization term $\lambda=0.2$. We then solve the MILP problem with a four-hour time limit, using Gurobi on a computer with 64 AMD Ryzen Threadripper PRO 5975WX 32-Cores and report the best solution found. As a baseline comparison, we repeat the RS method $1000$ times with $S_{RS} = 10000$ weight samples, and report the mean and standard deviations of accuracies. RS is shown to be comparable to Gradient-Descent methods in terms of accuracy, and more efficient in terms of computation time \cite{Karagulle2024spl}. We do not compare with the naive version (without pruning and log-transform) as MINLPs are highly costly, and the proposed method outperforms the MINLP problem by construction, i.e., a simpler and smaller problem. Table~\ref{tab:results} shows the results on training and test sets for both DNF options and both formulas. It is important to note that even though a global optimal is not guaranteed when the solver is time-limited, the MILP procedure improves over mean RS performance on the training set and shows promising performance on the test set. Also, when compared to its warm-started accuracy, our method raises the accuracy by up to $7\%$. Moreover,  when $\phi$ is used, the learned weights generalize to future seasons with different cars, different teams, and different drivers. This highlights the fact that, although driver skills and car performances are at the core of the race success, the proposed method can capture driver- and car-agnostic performance points. We do not observe the same performance with $\tilde{\phi}$, specifically, when DNF/DNS are included, the formula overfits to previous years. Options to fix are increasing the regularization constant or choosing a more representative formula.

When we take a closer look at the learned weights, we see that, for the DNF/DNS excluded case, the most important part of the formula is the initial grid position, followed by lap times, pit times, delta-to-lead, and completed laps. On the other hand, when DNF/DNS is included, the importance order changes to first the lap times, followed by the completed laps, initial grid position, and pit times, and finally, delta-to-lead. These insights can help teams improve their race-day strategies.

\setlength{\tabcolsep}{3.7pt} 
\setlength{\aboverulesep}{1pt}
\setlength{\belowrulesep}{1pt}
\begin{table}[h!]
\caption{Performance of RS and the proposed method on training and test sets in terms of accuracy (\%) for $\phi$ and $\tilde{\phi}$.}
\label{tab:results}
\begin{tabular}{cccccc}
\toprule
  \multirow{2}{*}{Formula} & \multirow{2}{*}{Method}  & \multicolumn{2}{c}{DNF/DNS excl.} & \multicolumn{2}{c}{DNF/DNS incl.}\\
   &  & Train & Test & Train & Test \\\midrule
  \multirow{2}{*}{$\phi$} & RS\cite{Karagulle2024spl} & $86.7\pm0.6$ & $82.5\pm2.6$& $81.2\pm0.8$& $78.8\pm10.4$  \\
   &  Ours & $92.9$ & $81.7$  & $85.3$ & $85.3$ \\\midrule
  \multirow{2}{*}{$\tilde{\phi}$} & RS\cite{Karagulle2024spl} & $86.7\pm0.7$ & $82.5\pm2.5$& $81.3\pm0.8$& $78.9\pm10.0$  \\
   & Ours & $93.9$  & $81.7$ & $90.2$  & $64.74$ \\    
  \bottomrule
\end{tabular}
\end{table}

Finally, we want to answer the following question: given only the laps observed thus far until lap $K$, how accurately can we predict the final standings? Figure~\ref{fig:f1_results} shows the evolution of the prediction accuracy of $\phi$ over the race. More specifically, when DNF/DNS are excluded, as Figure~\ref{fig:dnf_excl} shows, the model reaches over $85\%$ accuracy after fifteen laps. On the other hand, when DNF/DNS are included, the accuracy improves more slowly over laps, reflecting the unpredictable nature of the race events.

It is worth emphasizing that the goal of this experiment is not to provide insights into Formula 1 but to serve as a proof of concept. The results highlight the abilities of temporal logic learning in analyzing and learning patterns in an interpretable and efficient manner for complex problems like learning to rank on real-world sequential data, such as racing.  

\section{Conclusion, Limitations and Future Work}
In this work, we proposed a safe learning-from-human-feedback framework, based on temporal logic formalism, that can be efficiently solved to optimality. The framework consists of two key procedures: structural pruning and $\log$-transform. Structural pruning systematically removes parts of the robustness tree that do not affect the robustness value, reducing the problem size. The $\log$-transform linearizes the robustness computation constraints with respect to weights, enabling an MILP formulation. We demonstrated the performance of the proposed method on two experiments: the preference learning problem for a robot navigation task and the learning-to-rank problem using Formula 1 data. Results show that the method can capture diverse preferences effectively.

Despite the success of the proposed method in the experiments, several factors may limit its applicability. Mainly, the method requires domain knowledge and expertise in writing temporal logic formulas to define task specifications, as well as careful hyperparameter tuning to avoid overfitting. While this need introduces a reliance on expert input, it comes with the benefit of interpretability. Looking forward, we plan to integrate more streamlined task specification methods, potentially using large language models to translate natural language descriptions to STL formulas. 

\balance

\bibliographystyle{IEEEtran}
\bibliography{IEEEabrv,references.bib}
\end{document}